\title{The One-Inclusion Graph Algorithm is not Always Optimal} 
\author{Ishaq Aden-Ali\thanks{Department of Electrical Engineering and Computer Science, UC Berkeley. Email: {adenali@berkeley.edu, yeshwanth@berkeley.edu, shetty@berkeley.edu}} \and Yeshwanth Cherapanamjeri\footnotemark[1] \and Abhishek Shetty\footnotemark[1] \and Nikita Zhivotovskiy\thanks{Department of Statistics, UC Berkeley. Email: zhivotovskiy@berkeley.edu}
}
\date{\today}
\begin{document}

\maketitle
\begin{abstract}
The one-inclusion graph algorithm of Haussler, Littlestone, and Warmuth achieves an optimal in-expectation risk bound in the standard PAC classification setup. 
In one of the first COLT open problems, Warmuth conjectured that this prediction strategy \emph{always} implies an optimal high probability bound on the risk, and hence is also an optimal PAC algorithm. 
We refute this conjecture in the strongest sense: for any practically interesting Vapnik-Chervonenkis class, we provide an in-expectation optimal one-inclusion graph algorithm whose high probability risk bound cannot go beyond that implied by Markov's inequality. 
Our construction of these poorly performing one-inclusion graph algorithms uses Varshamov-Tenengolts error correcting codes.  

Our negative result has several implications. 
First, it shows that the same poor high-probability performance is inherited by several recent prediction strategies based on generalizations of the one-inclusion graph algorithm. 
Second, our analysis shows yet another statistical problem that enjoys an estimator that is provably optimal in expectation via a leave-one-out argument, but fails in the high-probability regime. 
This discrepancy occurs despite the boundedness of the binary loss for which arguments based on concentration inequalities often provide sharp high probability risk bounds.
\end{abstract}

\section{Introduction and main result}
Learning binary classifiers is arguably the oldest problem in the theory of machine learning. 
The model, which is captured by the PAC (Probably Approximately Correct) model of learning, traces back to the early works of Vapnik and Chervonenkis \cite{vapnik1964class, vapnik74theory} and of Valiant \cite{valiant1984theory}. A PAC learning algorithm is one that satisfies the following. Given a hypothesis class $\mathcal F$ (a hypothesis is a binary-valued function) defined on a domain $\mathcal X$, we observe a sample of points $(X_1, f^{\star}(X_1)), \ldots, (X_n, f^{\star}(X_n))$, called the \emph{training sample}, where $X_1, \ldots X_n$ are independent copies of a random variable $X \in \mathcal X$ distributed according to an unknown distribution $P$, and $f^{\star} \in \mathcal F$ is an unknown target hypothesis. The aim is to provide an algorithm that, based on the sample, outputs a hypothesis $\widehat{f}$ whose prediction error is as small as possible with high probability with respect to the realization of the training sample. 
Formally, we define the error $\err{\widehat{f}}{P} = \Pr_{X \sim P}\left[\widehat{f}(X) \not= f^\star(X)\right]$ as the probability of missclassification on a freshly sampled point. Since the target hypothesis $f^{\star}$ is in the hypothesis class $\mathcal F$, the most natural strategy is to pick \emph{any} hypothesis $\widehat f \in \mathcal F$ that is \emph{sample-consistent}: $\widehat f(X_i) = f^{\star}(X_i)$ for all $i = 1, \ldots, n$. The hypothesis selected using this strategy is usually referred to as an \emph{empirical risk minimizer} (ERM). Denoting any of these hypotheses by $\widehat{f}_{\textrm{ERM}}$, the standard bound \cite{vapnik1968algorithms, blumer1989learnability} shows that
\begin{equation}
\label{eq:ermbound}
\err{\widehat{f}_{\textrm{ERM}}}{P} = O\left(\frac{d}{n}\log\left(\frac{n}{d}\right) + \frac{1}{n}\log\left(\frac{1}{\delta}\right)\right),
\end{equation}
with probability at least $1 - \delta$, where $d$ is the Vapnik-Chervonenkis (VC) dimension of $\mathcal F$ (formally defined in~\cref{sec:prelim}).
Although this bound has been recently sharpened for ERM for some specific concept classes \cite{hanneke2016refined, zhivotovskiy2018localization}, it is known that the optimal risk bound
\begin{equation}
\label{eq:optimalpac}
\Theta\left(\frac{d}{n} + \frac{1}{n}\log\left(\frac{1}{\delta}\right)\right)
\end{equation}
can only be achieved in general by \emph{improper algorithms} \cite{bousquet2020proper}. These are algorithms that output a hypothesis outside the hypothesis class $\mathcal F$ and thus exclude the standard ERM strategy. The question of achieving the optimal sample complexity \eqref{eq:optimalpac} of PAC learning has been resolved by Hanneke \cite{hanneke2016optimal}, whose solution sharpens the majority vote analysis of Simon \cite{simon2015almost}. The solution uses a recursive majority vote scheme to achieve the optimal sample complexity \eqref{eq:optimalpac}.

However, for many years, the most natural candidate for being an optimal PAC learner was the \emph{one-inclusion graph} algorithm of Haussler, Littlestone and Warmuth \cite{haussler1994predicting},
whose \emph{in-expectation} risk is known to be optimal in this model. We now briefly describe the one-inclusion graph algorithm.
Recall that the projection of a hypothesis class $\mc{F}$ onto a subset of the domain $S=\{x_1, \dots , x_n \} \subseteq \mc{X}$ is defined as $\mc{F}|_S = \{ (f(x_1), \dots , f(x_n)) : f \in \mc{F} \}$.
Informally, the one-inclusion graph algorithm pre-determines a strategy that, for all possible realizations $S$ of the set of unique elements in the training sample and candidate test points $x$, orients/directs the edges of the \emph{one-inclusion graph} $\mc{G}(\mc{F}|_{S\cup \{x\}}) = (V,E)$ whose vertices are $\mc{F}|_{S\cup \{x\}}$ and whose edges connect two hypotheses that only differ on a \emph{single} point in $S \cup \{x\}$. At test time, given a concrete realization of $S$ and test point $x$, the algorithm
finds the edge (a pair of hypotheses) consistent with the training sample in the one-inclusion graph $\mc{G} (\mc{F}|_{S \cup \{x\}})$, and predicts on $x$ using the label that the head of this edge (a hypothesis) assigns to $x$. When the maximum out-degree of the vertices is small, one can show that the expected error of this prediction strategy is small~\cite{haussler1994predicting}. Throughout this section we denote the output of a one-inclusion graph algorithm by $\OIGsimpl$. We postpone a formal description of this algorithm to \cref{sec:prelim}.


The result in \cite{haussler1994predicting} shows that the following error bounds hold for \emph{any} distribution $P$ and \emph{any} target concept $f^{*} \in \mathcal F$,
\begin{equation}
\label{eq:errorboundforoig}
\E\ \err{\OIGsimpl}{P} \le \frac{d}{n + 1}, \quad \textrm{and by Markov's inequality:} \quad \err{\OIGsimpl}{P} \le \frac{d}{(n + 1)\delta},
\end{equation}
with probability at least $1 - \delta$. In the bound above, the expectation is taken with respect to the random sample $X_1, \ldots, X_n$. In one of the first COLT open problems Warmuth conjectured \cite{warmuth2004optimal} that the one-inclusion graph algorithm can \emph{always} achieve the optimal sample complexity \eqref{eq:optimalpac}. One of the motivations of Warmuth was that the one-inclusion graph algorithm is almost optimal in expectation \cite{li2001one}, including even the constant factor which is asymptotically tight. In this context, it is worth mentioning that the existing solution of Hanneke \cite{hanneke2016optimal} has a relatively large constant in the risk bound. Therefore, as noticed in \cite{hanneke2016optimal}, a positive solution to the conjecture of Warmuth could also lead to improved constant factors in the optimal bound \eqref{eq:optimalpac}.
Recently, \cite{larsen2022bagging} used arguments inspired by \cite{hanneke2016optimal} to show that the classical bagging algorithm also achieves the optimal PAC complexity bound, while also bringing up the question about optimal constants.
Some additional interest in this question arises from numerous recent generalizations of the one-inclusion graph algorithm, some of which will be mentioned in~\cref{sec:discussion}. By understanding the basic PAC learning setting, we can possibly improve the bounds for these newer generalizations. 

In this paper, we refute the conjecture of Warmuth in a strong sense by showing that the above application of Markov's inequality is essentially the best one can hope for in general. Our negative result works for almost any practically interesting hypothesis class including the class induced by half-spaces in $\mathbb{R}^p$ for $p \ge 2$. Before stating our result and Warmuth's question formally, we first introduce a few notions.

A hypothesis class $\mc{F}$ is said to contain a \emph{star set} of arbitrary size if for each $n$, there is a subset $S =\{ x_1, \dots , x_n \}$ of the domain and classifiers $f_0 , f_1,\dots , f_n \in \mc{F}$ such that for all $i$, $f_0$ and $f_i$ disagree on a unique point in $S$. 
A more formal definition can be found in~\cref{def:star}.
In particular, such classes are exactly the classes whose \emph{star number} (see the formal definition in \cite[Definition 2]{hanneke2015minimax}) is infinite. 
Many practically interesting VC classes have arbitrarily large star sets. 
These examples include the class of intervals on the real line as well as the class of half-spaces in $\mathbb{R}^p$ for $p \ge 2$. 
We refer to \cite[Section 4.1]{hanneke2015minimax} for a detailed discussion. 

Let $\mathcal F$ be a hypothesis class defined on a countable set $\mathcal X$ with finite VC dimension $d$. 
Informally, we say that a one-inclusion graph algorithm is \emph{valid} for $\mathcal F$ if for any $n$-element subset $S \subset \mathcal X$, the one-inclusion graph of the projection of $\mathcal F$ on $S$ is oriented in a way such that the maximum out-degree of the one-inclusion graph is at most $O(d)$.
A formal definition can be found in~\cref{sec:prelim}.
The result of Haussler, Littlestone, and Warmuth \cite{haussler1994predicting} implies that every VC class has a valid one-inclusion graph algorithm. 
Moreover, for any such algorithm both of the bounds in \eqref{eq:errorboundforoig} hold up to multiplicative constant factors. 
Observe that orientations of one-inclusion graphs with out-degree at most $O(d)$ are not necessarily unique, so in general we have a family of valid strategies. 
Using this definition, we can phrase Warmuth's conjecture~\cite{warmuth2004optimal} as follows.
\begin{conjecture*}[Warmuth \cite{warmuth2004optimal}]
Does the optimal sample complexity bound \eqref{eq:optimalpac} hold for any valid one-inclusion graph algorithm?
\end{conjecture*}

Our main result negatively resolves Warmuth's conjecture.
\begin{theorem}
\label{thm:ourlowerbound}
Let $\mathcal F$ be a hypothesis class defined on a countable set $\mathcal X$ with finite VC dimension $d$. Assume that $\mathcal F$ has a star set of arbitrary size.
There are positive absolute constants $c_1, c_2$ and $c_3$ such that for any sample size $n$ and confidence parameter $\delta$ satisfying $\delta \in ( c_1 d / n, c_2)$, there is a valid one-inclusion graph algorithm that outputs the hypothesis $\widehat{f}_{\operatorname{OIG}}$ such that for some distribution $P = P(n, \delta)$ over $\mc{X}$ and target concept $f^{\star} \in \mathcal{F}$ we have, with probability at least $1 - \delta$, 
\[
\err{\OIGsimpl}{P} \geq c_3 \frac{d}{ n \delta}.
\]
\end{theorem}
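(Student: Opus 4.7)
The plan is to construct, for each $n$ and $\delta$, a valid one-inclusion graph orientation together with a distribution $P$ and target $f^{\star}$ that concentrate the OIG error onto a $\delta$-fraction of training samples at the Markov rate $\Theta(d/(n\delta))$. The whole construction lives inside a large star subset of $\mathcal{F}$. Using the star hypothesis, I fix a star of size $N = N(n,d,\delta)$ (polynomial in the parameters): points $x_1, \dots, x_N$ in the domain and hypotheses $f_0, f_1, \dots, f_N \in \mathcal{F}$ with $f_i$ disagreeing from $f_0$ only at $x_i$. Set $f^{\star} = f_0$ and let $P$ be uniform on $\{x_1, \dots, x_N\}$. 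All training labels are then $0$, and on any projection $\mathcal{F}|_T$ with $T \subseteq \{x_1, \dots, x_N\}$ the only sample-consistent vertices (after identifying equivalent projections from elsewhere in $\mathcal{F}$) are the all-zeros vector and the unit vector at the test location $x_j \in T \setminus S$; the OIG prediction is incorrect exactly when the edge between these two is oriented away from the all-zeros center.

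Consequently, the design task collapses to choosing, for every $T$ of size at most $n+1$, an out-flip set $B_T \subseteq T$ of size at most $d$ (the out-edges from the all-zeros vertex), after which the rest of $\mathcal{G}(\mathcal{F}|_T)$ can be oriented to any valid $O(d)$-out-degree orientation by the classical Haussler--Littlestone--Warmuth construction. For such a family, the conditional error on a training sample $S$ equals $\phi(S)/N$, where $\phi(S) := |\{x \notin S : x \in B_{S \cup \{x\}}\}|$. The double-counting identity $\sum_S \phi(S) = \sum_T |B_T| \le d\binom{N}{n+1}$ shows that if one wants to concentrate $\phi$ on a $\delta$-fraction of $n$-subsets, the highest density achievable per bad $S$ is of order $dN/(n\delta)$, corresponding to error $\Omega(d/(n\delta))$ after dividing by $N$. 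The goal is therefore to build such a maximally concentrated family.

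This is where Varshamov--Tenengolts codes enter. Their single-deletion-correcting property supplies an explicit deterministic scheme to choose $B_T \subseteq T$ with $|B_T| \le d$ for every $T$, such that for a designated $\delta$-fraction of $n$-subsets $S$ the number $\phi(S)$ of bad extensions reaches the budget-maximal order $\Omega(dN/(n\delta))$. The VT structure lets one define $B_T$ via a coordinate-residue rule on the set structure of $T$, and the deletion-correction controls how often a given $S = T \setminus \{x\}$ is reached from distinct $(T',x')$ pairs, so the peel-offs concentrate on the prescribed $\delta$-fraction rather than spreading uniformly. Choosing $N \gg n^2$ makes the i.i.d.\ training sample essentially a uniform $n$-subset of $\{x_1, \dots, x_N\}$, so the combinatorial $\delta$-fraction transfers to probability at least $\delta$ over the sampling, and the resulting conditional error $\phi(S)/N$ is of the desired order $\Omega(d/(n\delta))$.

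The main obstacle is this VT-based packing: one needs the family $\{B_T\}$ to satisfy simultaneously (i) $|B_T| \le d$ for every $T$, (ii) the peel-offs $\{T \setminus \{x\} : x \in B_T\}$ cover only a $\delta$-fraction of $n$-subsets, and (iii) the peel-off density on that cover is of order $dN/(n\delta)$. These three requirements are in tight tension: a random choice of $B_T$ would give $\phi(S)$ of constant order on almost every $S$ and therefore miss both the concentration and the density. The Varshamov--Tenengolts structure is precisely what lets one overcome this tension, providing an explicit deterministic packing that respects the worst-case out-degree bound while driving the bad-sample error all the way up to Markov's inequality. Once the family is constructed, validity, the expected-risk bound $O(d/n)$, and the claimed high-probability lower bound follow immediately from the above counting and a routine transfer from uniform $n$-subsets to i.i.d.\ samples.
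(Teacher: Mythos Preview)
Your high-level reduction is exactly the paper's: inside a star, with $f^\star=f_0$ and $P$ uniform, the error equals $\phi(S)/N$ where $\phi(S)$ counts the extensions $T=S\cup\{x\}$ whose $f_0$-edge is oriented toward $x$, and the budget $\sum_S\phi(S)\le d\binom{N}{n+1}$ is the right double-count. The identification of the three tensions (i)--(iii) is also correct.

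There is, however, a genuine gap at the parameter choice $N\gg n^2$. The VT property you invoke says that for each $(n{+}1)$-set $T$, each VT class of length $N$ contains at most one of its $n$-subsets. If the designated ``bad'' training sets form a union of $m$ VT classes, you need $m\approx\delta N$ to occupy a $\delta$-fraction; then $|M(T)|\le m\approx\delta N$, and after cutting $B_T$ down to size $d$ each bad $S$ is retained by each of its $N-n$ extensions with probability at least $d/m\approx d/(\delta N)$. This yields only $\phi(S)=\Omega(d/\delta)$ and hence error $\Omega(d/(N\delta))$, which for $N\gg n^2$ is far below the target $d/(n\delta)$. Achieving your claimed $\phi(S)=\Theta(dN/(n\delta))$ would require $|M(T)|=O(\delta n)$ rather than $O(\delta N)$, and nothing in the VT structure on $[N]$ delivers that once $N\gg n$. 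The paper instead takes $N=2n$, so that $m=\Theta(\delta n)$ and the same computation gives $\phi(S)=\Omega(d/\delta)$ and error $\Omega(d/(n\delta))$; the collisions you wanted to sidestep by taking $N$ large are handled by conditioning on the number $k$ of distinct elements in the i.i.d.\ sample and carrying out the construction on each layer $\mathcal{S}_k$ versus $\mathcal{S}_{k+1}$ separately.

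A second gap is the assertion that VT codes alone supply an ``explicit deterministic packing'' meeting (i)--(iii). They do not: once $m>d$ you must still select $B_T\subset M(T)$ of size $d$, and no coordinate-residue rule is offered that guarantees most bad $S$ keep $\phi(S)$ large. The paper layers a probabilistic step on top of VT: for each $T$ it draws $d$ elements of $M(T)$ uniformly without replacement, computes $\mathbb{E}[\phi(S)]\ge d/(6\delta)$ and $\mathrm{Var}(\phi(S))\le\mathbb{E}[\phi(S)]$ for each bad $S$, applies Chebyshev to get $\phi(S)\ge d/(8\delta)$ with probability at least $3/4$, and concludes by linearity of expectation. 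This randomized selection is the missing ingredient in your outline.
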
 

\subsection{Proof overview}
We now provide a high-level overview of our proof. 
For simplicity, we will sketch the argument for the hypothesis class $\mc{F}^{\text{ind}}$ consisting of functions that take on the value $1$ on at most one point in the domain $\mc{X} = \mb{Z}$.
In other words, $\mc{F}^{\text{ind}}$ consists of the zero function and the indicator functions of single points on $\mc{X}$.
It is easy to verify that the VC dimension of $\mc{F}^{\text{ind}}$ is $1$. 
Furthermore, notice that every $n$-element subset $S = \{x_1, \dots, x_n\} \subset \mc{X}$ is a star set for $\mc{F}$, witnessed by $\mc{F}|_{S} = \{f_0, \dots, f_{n}\}$, where $f_0$ is the zero function and $f_i$ is the indicator on $x_i$ for $i \ge 1$.

Recall that the one-inclusion graph algorithm uses the entire training sample $S$ together with the test point $X_{n+1}$ to determine the label of $X_{n+1}$ by building and orienting a one-inclusion graph. 
For our specific class $\mc{F}^{\text{ind}}$, whenever the test point $X_{n+1}$ is not in the training sample, this strategy always boils down to picking between the label $y_{n+1} = 1$ determined by $f_{n+1}$ and the label $y_{n+1} =  0$ determined by $f_0$.
Our goal will be to show that there is a choice of orientation that is valid for this class, but leads to a one-inclusion graph algorithm that has constant error with decent probability.

We now explain the desired properties that we want the orientations we choose to satisfy.
Fix a sample size $n$. 
Our hard distribution will be the uniform distribution on the set $[2n]$ and we will pick the target function $f^\star$ to be the zero function.
Let $S$ be a training sample of size $n$ and assume for simplicity we always sample $n$ unique points.  
Our goal will be to show that a $\Theta(1/n)$ fraction of the possible training samples can be made into ``bad'' training samples.
Specifically, we will show that for any such ``bad'' training sample $S$, when we project $\mc{F}^{\text{ind}}$ onto $S$ and an unseen test point $X_{n+1}$, the orientation determined by the one-inclusion graph algorithm directs the edge $e = \{f_0, f_{n+1}\}$ from the zero function $f_0$ to $f_{n+1}$ for a constant fraction of the realizations of $X_{n+1}$. (See~\cref{fig:OIG-orientations}.)
This would immediately imply that with probability at least $\Theta(1/n)$ (we sample the training sample uniformly) we get a one-inclusion graph algorithm that has constant error.

How should we select these ``bad'' training samples? A naive approach would be to try and pick random orientations, i.e., for each of the possible one-inclusion graphs formed by $n+1$ sized sets $S' \subset [2n]$, pick a random edge $\{f_0, f_1\}$ and direct it towards $f_i$.
Unfortunately, such an approach cannot work since it does not ``coordinate'' the errors well.
Concretely, let $S$ be an $n$-sized training set, $Z = [2n] \setminus S$ (the test points not observed in the training set) and for any $X_{n + 1} \in Z$, $A_{X_{n + 1}}$ is the event that the oriented one-inclusion graph for $S \cup \{X_{n + 1}\}$ directs $e = \{f_0, f_{n + 1}\}$ towards $f_{n + 1}$. Then the probability that more than $t = C \log (n)$, for some large constant $C$, possible extensions of $S$ are oriented towards $f_{n + 1}$ is bounded as
\begin{equation*}
    \Pr \lsrs{\text{At least } t \text{ of the } A_{X_{n + 1}} \text{ occur}} \leq \binom{n}{t} \lprp{\frac{1}{n + 1}}^{t} \leq \frac{n^t}{t!} \cdot \frac{1}{(n + 1)^t} < \frac{1}{t!} < \frac{1}{n}.
\end{equation*}
Hence, this strategy of randomized orientations would result in a set of orientations where at most $1 / n$ fraction of the training samples incur error more than $t / n = O(\log (n) / n)$, a bound significantly worse than the desired bound from \cref{thm:ourlowerbound} which corresponds to constant error in this scenario. The key point of failure in this approach is that for any fixed training sample, $S$, the orientation of each of its extensions is independently chosen. To overcome this, we use a different approach.

We instead shift our perspective towards the possible \emph{training sets}, $S$, that one may observe and correlate the orientations of the $(n + 1)$-sized extensions such that large error is incurred when $S$ is drawn as a training set. Recall that these orientations were previously chosen randomly. Formally, let $\mc{S}$ denote the set of $(2n)$-length binary vectors with exactly $n$-ones where each element corresponds to a possible training set that we may sample. Our goal now is to identify a subset $T$ such that $\abs{T} \geq \Omega (\abs{\mc{S}} / n)$ and for each $S \in T$, orient most of its extensions such that the edge $e = \{f_0, f_{n + 1}\}$ is oriented towards $f_{n + 1}$. The core challenge here is that any given $n + 1$ sized set, $S'$, has $n + 1$ possible training sets which could have been extended to generate it and we need to ensure that while defining extensions for two distinct $S_1, S_2 \in T$, we do not generate two contradictory orientations for the \emph{same} $n + 1$ sized extension, $S'$. Our problem now reduces to the task finding a suitable set, $T$, which simultaneously constitutes a significant fraction of $\mc{S}$ and whose elements do not result in clashes when extended to their $n + 1$ sized counterparts. To do this, we exploit a connection to coding theory.

Consider the following family of vectors:
\begin{align*}
    \code{a}{2n} = \left\{ C \in \{0, 1\}^{2n} : \sum_i^{2n} i \cdot C(i) \equiv a \pmod{2n+1} \right\},
\end{align*}
where $a \in \{0, 1, \dots, 2n\}$.
These are the celebrated \emph{Varshamov-Tenengolts} (VT) error correcting codes (with parameter $a$) introduced in \cite{varshamov1965}.
In the context of coding theory, VT codes are able to recover a transmitted message $C \in \code{a}{2n}$ from a corrupted message $\widetilde{C}$ where a single element in the vector was flipped from a $1$ to a $0$.\footnote{In fact,  as shown by Levenshtein \cite{levenshtein1966binary}, VT codes can even handle single bit deletions. See \cite{Sloanebookchapter}.}
A consequence of this property of $\code{a}{2n}$ is that, for any two vectors $C_1, C_2 \in \code{a}{2n}$ that have the same number of $1$s, $C_1$ and $C_2$ must differ on more than $2$ entries.
We can use this ``uniqueness'' property in the following way: consider the subsets of $\code{a}{2n}$ that have an equal number of 0s and 1s and call this set $T_a$. 
We will view $T_a$ as a collection of possible training samples we can receive. Notice that the extension of any two training samples $S_1, S_2 \in T_a$ to $n+1$ sized sets $S_1'$ and $S_2'$ can never yield the same one-inclusion graph since $S_1' \not= S_2'$ by the ``uniqueness'' property and hence, this rules out the possibility of obtaining contradictory orientations for the same $n + 1$ sized extensions from two different training sets. 
Thus, for any training sample $S \in T_a$, we can coordinate the error of the algorithm by orienting the one-inclusion graphs formed from every extension $S' = S \cup \{x_{n+1}\}$ (where $x_{n+1} \in [2n] \setminus S$) to direct the edge $\{f_0, f_{n+1}\}$ towards $f_{n+1}$.
By picking an $a$ that maximizes the size of $T_a$, we can conclude that $T_a$ contains at least a $1/(2n+1)$ fraction of the possible training samples, each of which is ``bad'' since they induce a prediction error of $1/2$.
Our argument easily applies to VC classes that have arbitrarily large star sets.
In the actual proof we extend the argument above to use multiple $T_a$'s together with a careful application of the probabilistic method. Doing this introduces a tradeoff between prediction error and failure probability that incorporates all three parameters $\delta$, $n$, and $d$.
The full details of our construction can be found in~\cref{sec:construction}.

The remainder of this paper is organized as follows. 
In \cref{sec:discussion} we survey the relevant literature and discuss related results.
In \cref{sec:prelim} we state some preliminary definitions and introduce notation used throughout the paper. 
\cref{sec:construction} is devoted to our construction and the full proof of~\cref{thm:ourlowerbound}.

\begin{figure}[t]
    \centering
    \begin{subfigure}{0.7\textwidth}
    \bigskip
    \resizebox{\textwidth}{!}
{

\tikzset{every picture/.style={line width=0.75pt}} 

\begin{tikzpicture}[x=0.75pt,y=0.75pt,yscale=-1,xscale=1]

\draw   (120,75.5) .. controls (120,65.84) and (127.84,58) .. (137.5,58) .. controls (147.16,58) and (155,65.84) .. (155,75.5) .. controls (155,85.16) and (147.16,93) .. (137.5,93) .. controls (127.84,93) and (120,85.16) .. (120,75.5) -- cycle ;
\draw   (220,75.5) .. controls (220,65.84) and (227.84,58) .. (237.5,58) .. controls (247.16,58) and (255,65.84) .. (255,75.5) .. controls (255,85.16) and (247.16,93) .. (237.5,93) .. controls (227.84,93) and (220,85.16) .. (220,75.5) -- cycle ;
\draw   (320,75.5) .. controls (320,65.84) and (327.84,58) .. (337.5,58) .. controls (347.16,58) and (355,65.84) .. (355,75.5) .. controls (355,85.16) and (347.16,93) .. (337.5,93) .. controls (327.84,93) and (320,85.16) .. (320,75.5) -- cycle ;
\draw  [fill={rgb, 255:red, 0; green, 0; blue, 0 }  ,fill opacity=1 ] (276.5,73) .. controls (276.5,72.45) and (276.95,72) .. (277.5,72) .. controls (278.05,72) and (278.5,72.45) .. (278.5,73) .. controls (278.5,73.55) and (278.05,74) .. (277.5,74) .. controls (276.95,74) and (276.5,73.55) .. (276.5,73) -- cycle ;
\draw  [fill={rgb, 255:red, 0; green, 0; blue, 0 }  ,fill opacity=1 ] (286.5,73) .. controls (286.5,72.45) and (286.95,72) .. (287.5,72) .. controls (288.05,72) and (288.5,72.45) .. (288.5,73) .. controls (288.5,73.55) and (288.05,74) .. (287.5,74) .. controls (286.95,74) and (286.5,73.55) .. (286.5,73) -- cycle ;
\draw  [fill={rgb, 255:red, 0; green, 0; blue, 0 }  ,fill opacity=1 ] (296.5,73) .. controls (296.5,72.45) and (296.95,72) .. (297.5,72) .. controls (298.05,72) and (298.5,72.45) .. (298.5,73) .. controls (298.5,73.55) and (298.05,74) .. (297.5,74) .. controls (296.95,74) and (296.5,73.55) .. (296.5,73) -- cycle ;
\draw    (146.95,89.95) -- (268.99,135.88) ;
\draw [shift={(271.8,136.93)}, rotate = 200.62] [fill={rgb, 255:red, 0; green, 0; blue, 0 }  ][line width=0.08]  [draw opacity=0] (8.93,-4.29) -- (0,0) -- (8.93,4.29) -- cycle    ;
\draw    (245.95,90.95) -- (279.96,126.34) ;
\draw [shift={(282.04,128.5)}, rotate = 226.14] [fill={rgb, 255:red, 0; green, 0; blue, 0 }  ][line width=0.08]  [draw opacity=0] (8.93,-4.29) -- (0,0) -- (8.93,4.29) -- cycle    ;
\draw    (329.05,90.95) -- (295.04,126.34) ;
\draw [shift={(292.96,128.5)}, rotate = 313.86] [fill={rgb, 255:red, 0; green, 0; blue, 0 }  ][line width=0.08]  [draw opacity=0] (8.93,-4.29) -- (0,0) -- (8.93,4.29) -- cycle    ;
\draw   (270,145.5) .. controls (270,135.84) and (277.84,128) .. (287.5,128) .. controls (297.16,128) and (305,135.84) .. (305,145.5) .. controls (305,155.16) and (297.16,163) .. (287.5,163) .. controls (277.84,163) and (270,155.16) .. (270,145.5) -- cycle ;
\draw   (420,75.5) .. controls (420,65.84) and (427.84,58) .. (437.5,58) .. controls (447.16,58) and (455,65.84) .. (455,75.5) .. controls (455,85.16) and (447.16,93) .. (437.5,93) .. controls (427.84,93) and (420,85.16) .. (420,75.5) -- cycle ;
\draw [color={rgb, 255:red, 208; green, 2; blue, 27 }  ,draw opacity=1 ]   (303.27,136.93) -- (424.74,91.42) ;
\draw [shift={(427.55,90.36)}, rotate = 159.46] [fill={rgb, 255:red, 208; green, 2; blue, 27 }  ,fill opacity=1 ][line width=0.08]  [draw opacity=0] (8.93,-4.29) -- (0,0) -- (8.93,4.29) -- cycle    ;

\draw (130,68.9) node [anchor=north west][inner sep=0.75pt]  [font=\scriptsize]  {$f_{1}$};
\draw (230,68.9) node [anchor=north west][inner sep=0.75pt]  [font=\scriptsize]  {$f_{2}$};
\draw (330,68.9) node [anchor=north west][inner sep=0.75pt]  [font=\scriptsize]  {$f_{n}$};
\draw (425,68.9) node [anchor=north west][inner sep=0.75pt]  [font=\scriptsize]  {$f_{n+1}$};
\draw (280,138.9) node [anchor=north west][inner sep=0.75pt]  [font=\scriptsize]  {$f_{0}$};

\end{tikzpicture}

}
    \caption{An orientation that incorrectly predicts using $f_{n+1}$.}
    \end{subfigure}
    \
    \begin{subfigure}{0.7\textwidth}
    \resizebox{\textwidth}{!}
{

\tikzset{every picture/.style={line width=0.75pt}} 

\begin{tikzpicture}[x=0.75pt,y=0.75pt,yscale=-1,xscale=1]

\draw   (120,75.5) .. controls (120,65.84) and (127.84,58) .. (137.5,58) .. controls (147.16,58) and (155,65.84) .. (155,75.5) .. controls (155,85.16) and (147.16,93) .. (137.5,93) .. controls (127.84,93) and (120,85.16) .. (120,75.5) -- cycle ;
\draw   (220,75.5) .. controls (220,65.84) and (227.84,58) .. (237.5,58) .. controls (247.16,58) and (255,65.84) .. (255,75.5) .. controls (255,85.16) and (247.16,93) .. (237.5,93) .. controls (227.84,93) and (220,85.16) .. (220,75.5) -- cycle ;
\draw   (320,75.5) .. controls (320,65.84) and (327.84,58) .. (337.5,58) .. controls (347.16,58) and (355,65.84) .. (355,75.5) .. controls (355,85.16) and (347.16,93) .. (337.5,93) .. controls (327.84,93) and (320,85.16) .. (320,75.5) -- cycle ;
\draw  [fill={rgb, 255:red, 0; green, 0; blue, 0 }  ,fill opacity=1 ] (276.5,73) .. controls (276.5,72.45) and (276.95,72) .. (277.5,72) .. controls (278.05,72) and (278.5,72.45) .. (278.5,73) .. controls (278.5,73.55) and (278.05,74) .. (277.5,74) .. controls (276.95,74) and (276.5,73.55) .. (276.5,73) -- cycle ;
\draw  [fill={rgb, 255:red, 0; green, 0; blue, 0 }  ,fill opacity=1 ] (286.5,73) .. controls (286.5,72.45) and (286.95,72) .. (287.5,72) .. controls (288.05,72) and (288.5,72.45) .. (288.5,73) .. controls (288.5,73.55) and (288.05,74) .. (287.5,74) .. controls (286.95,74) and (286.5,73.55) .. (286.5,73) -- cycle ;
\draw  [fill={rgb, 255:red, 0; green, 0; blue, 0 }  ,fill opacity=1 ] (296.5,73) .. controls (296.5,72.45) and (296.95,72) .. (297.5,72) .. controls (298.05,72) and (298.5,72.45) .. (298.5,73) .. controls (298.5,73.55) and (298.05,74) .. (297.5,74) .. controls (296.95,74) and (296.5,73.55) .. (296.5,73) -- cycle ;
\draw    (146.95,89.95) -- (268.99,135.88) ;
\draw [shift={(271.8,136.93)}, rotate = 200.62] [fill={rgb, 255:red, 0; green, 0; blue, 0 }  ][line width=0.08]  [draw opacity=0] (8.93,-4.29) -- (0,0) -- (8.93,4.29) -- cycle    ;
\draw [color={rgb, 255:red, 208; green, 2; blue, 27 }  ,draw opacity=1 ]   (428.05,89.95) -- (306.07,135.88) ;
\draw [shift={(303.27,136.93)}, rotate = 339.37] [fill={rgb, 255:red, 208; green, 2; blue, 27 }  ,fill opacity=1 ][line width=0.08]  [draw opacity=0] (8.93,-4.29) -- (0,0) -- (8.93,4.29) -- cycle    ;
\draw    (245.95,90.95) -- (279.96,126.34) ;
\draw [shift={(282.04,128.5)}, rotate = 226.14] [fill={rgb, 255:red, 0; green, 0; blue, 0 }  ][line width=0.08]  [draw opacity=0] (8.93,-4.29) -- (0,0) -- (8.93,4.29) -- cycle    ;
\draw    (329.05,90.95) -- (295.04,126.34) ;
\draw [shift={(292.96,128.5)}, rotate = 313.86] [fill={rgb, 255:red, 0; green, 0; blue, 0 }  ][line width=0.08]  [draw opacity=0] (8.93,-4.29) -- (0,0) -- (8.93,4.29) -- cycle    ;
\draw   (270,145.5) .. controls (270,135.84) and (277.84,128) .. (287.5,128) .. controls (297.16,128) and (305,135.84) .. (305,145.5) .. controls (305,155.16) and (297.16,163) .. (287.5,163) .. controls (277.84,163) and (270,155.16) .. (270,145.5) -- cycle ;
\draw   (420,75.5) .. controls (420,65.84) and (427.84,58) .. (437.5,58) .. controls (447.16,58) and (455,65.84) .. (455,75.5) .. controls (455,85.16) and (447.16,93) .. (437.5,93) .. controls (427.84,93) and (420,85.16) .. (420,75.5) -- cycle ;

\draw (130,68.9) node [anchor=north west][inner sep=0.75pt]  [font=\scriptsize]  {$f_{1}$};
\draw (230,68.9) node [anchor=north west][inner sep=0.75pt]  [font=\scriptsize]  {$f_{2}$};
\draw (330,68.9) node [anchor=north west][inner sep=0.75pt]  [font=\scriptsize]  {$f_{n}$};
\draw (425,68.9) node [anchor=north west][inner sep=0.75pt]  [font=\scriptsize]  {$f_{n+1}$};
\draw (280,138.9) node [anchor=north west][inner sep=0.75pt]  [font=\scriptsize]  {$f_{0}$};

\end{tikzpicture}

}
    \caption{Orientation that always correctly predicts using $f_{0}$.
    }
    \end{subfigure}
    \caption{Projection of the class of indicators $\mc{F}^{\text{ind}}$ onto a size $n+1$ star set $S$.
    For $i\ge 1$ the function $f_i$ is the function that disagrees with $f_0$ on $x_{i}$.
    The red edge represents which hypothesis the one-inclusion graph algorithm picks between to determine the label of the point $x_{n+1}$.}
    \label{fig:OIG-orientations}
\end{figure}
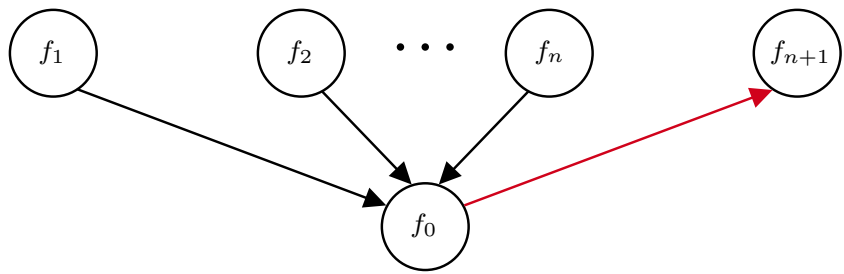
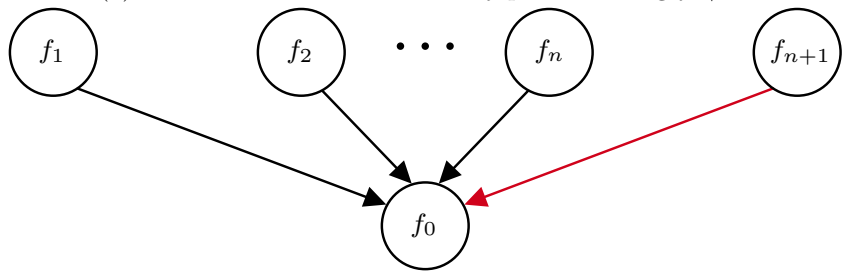

\section{Related work and discussion}
\label{sec:discussion}
In this section, we present some remarks to help the reader place our result in a broader context. We also discuss some relevant literature.

\paragraph{The leave-one-out/exchangeability argument and high probability risk bounds.} The upper bound for any valid one-inclusion graph algorithm is proven in \cite{haussler1994predicting} using a leave-one-out argument. The earliest theoretical analysis of a leave-one-out argument in our context is sometimes attributed to Lunts and Brailovsky \cite{lunts1967evaluation}, who noticed that although this method gives sharp in-expectation risk bounds, it does not necessarily lead to high probability/low variance bounds. 
In fact, Lunts and Brailovsky connected the variance of the leave-one-out bound with stability\footnote{By stability we mean sensitivity to small perturbations in the training sample.} properties of the underlying learning algorithm and provided an example where the variance of the prediction error can be large despite a small leave-one-out error. 
Based on this, Vapnik and Chervonenkis later asked \cite[Chapter VI, Section 7]{vapnik74theory} if, for most practically interesting classification algorithms, a small leave-one-out error also leads to small prediction error with high probability.
It is worth mentioning that the recent analysis of \emph{stable compression schemes} in \cite{Zhivotovskiy2017optimal, bousquet2020proper, hanneke2021stable} shows that a leave-one-out argument can lead to sharp high probability bounds when additional stability-type assumptions are made.
The result of~\cref{thm:ourlowerbound} is on the other end of the spectrum: we construct a valid one-inclusion graph algorithm that does not satisfy any of these stability-type properties. 

\paragraph{Confidence boosting approach.} One of the standard ways to boost low confidence classifiers (corresponding to e.g.,\ $\delta = 1/2$ in \eqref{eq:errorboundforoig}) is based on the following approach. One first splits the sample into approximately $\log(1/\delta)$ non-intersecting equal parts to learn roughly $\log(1/\delta)$ independent classifiers, and then aggregates them to pick the best classifier. 
This idea is exploited in \cite{haussler1994predicting}, where the authors run the one-inclusion graph algorithm multiple times to provide an algorithm whose probability of error is bounded by
\[
O\left(\frac{d}{n}\log\left(\frac{1}{\delta}\right)\right).
\]
While confidence boosting appears to be a general technique to get high probability bounds, we argue that at least in some cases we have to understand the high probability performance of the original algorithm. First, we observe that the term $\log\left(\frac{1}{\delta}\right)$ appears multiplicatively in the risk bound for the boosted algorithm, while the optimal risk bound \eqref{eq:optimalpac} has an additive  $\log\left(\frac{1}{\delta}\right)$ term.
Second, it is not even clear that this approach is generally applicable to leave-one-out based estimators as the confidence boosting approach exploits the realizable/boundedness of the loss assumptions in a strong sense. In particular, it has been recently shown in \cite{mourtada2021} that in a related setup of agnostic linear regression the algorithm of Forster and Warmuth \cite{forster2002relative}, whose analysis follows the same leave-one-out argument used in the analysis of the one-inclusion graph algorithm \cite{haussler1994predicting}, provides a constant risk with at least constant probability. When learning in the presence of noise, Markov's inequality in the confidence boosting trick cannot be applied as the \emph{excess risk} is potentially negative for improper learners (see \cite{mourtada2021} for more details). 

\paragraph{Orientations of one-inclusion graph leading to optimal PAC learners.} For some specific classes the one-inclusion graph algorithm may correspond to an optimal PAC learner. In particular, for \emph{intersection closed} VC classes the so-called \emph{closure} algorithm achieves the bound \eqref{eq:optimalpac}. We refer to \cite{darnstadt2015optimal} for a detailed description as well as to a sequence of papers \cite{helmbold1990learning, auer2007new, hanneke2016refined, bousquet2020proper}, where various results showing the optimality of the closure algorithm are provided. However, as noted by Warmuth \cite{warmuth2004optimal}, the closure algorithm corresponds to a \emph{specific} family of orientations of the one-inclusion graph (the same observation is explicit in \cite[Remark 2]{auer2007new}).
It is not clear if the known arguments (the original proof was based on a network flow argument \cite{haussler1994predicting}, and another existence argument of Haussler \cite{haussler1995sphere} is based on the result of Alon and Tarsi \cite{alon1992colorings}) that focus on the \emph{existence} of valid one-inclusion graph algorithms will lead to orientations exactly corresponding to the closure algorithm for intersection-closed classes. In fact, it follows from our analysis that for some intersection-closed classes we have two valid one-inclusion graph prediction strategies such that one achieves the optimal PAC bound \eqref{eq:optimalpac}, while the second cannot bypass the tail of Markov's inequality \eqref{eq:errorboundforoig}.

\paragraph{Applications and extensions of the OIG algorithm.}

Despite some limitations observed in this paper, the importance of the original  one-inclusion algorithm of Haussler, Littlestone, and Warmuth \cite{haussler1994predicting} follows from various applications and extensions of this algorithm in the literature. Here we list some of them. The analysis of the one-inclusion graph algorithm is used in the proof of Haussler's packing lemma \cite{haussler1995sphere} that provides a sharp bound for covering numbers of VC classes. We refer to \cite{bartlett1998prediction, kupavskii2020epsilon} for extensions and simplifications of Haussler's analysis. Haussler's packing lemma is widely used in empirical process theory and computational geometry. 

A careful inspection of the analysis of Haussler, Littlestone, and Warmuth \cite{haussler1994predicting} shows that the error bound of the one-inclusion graph does not depend on the VC dimension itself, but rather on the average density of subgraphs of the one-inclusion graph.
This observation has allowed for a multiclass extension\footnote{We remark that the analysis of Hanneke's optimal PAC algorithm \cite{hanneke2016optimal} uses a VC dimension based uniform convergence argument so it does not easily apply to these extensions.} of this algorithm leading to new complexity measures and non-trivial risk bounds in this setting \cite{rubinstein2009shifting, simon2010one, daniely2014optimal, brukhim2022characterization}.
Similarly, the one-inclusion graph algorithm can be extended to the (bounded) real valued regression setup \cite{bartlett1998prediction}.
The analysis of Long \cite{long1998complexity} allows, in particular, to extend the one-inclusion graph algorithm to the agnostic classification setting. 
Explicit bounds of this sort can be found in \cite{hanneke2022optimal}. 
Furthermore, the one-inclusion graphs algorithm and some extensions have proven to be useful in other generalizations of the binary classification setup such as universal learning~\cite{bousquet2021theory}, robust learning~\cite{AttiasHM22, montasser2022adversarially}, and several other setups~\cite{HaghifamMRD22, alon2022theory, charikarP2022}.
As we mentioned earlier, our lower bound has implications for some of the settings studied in these papers.
\section{Preliminaries and notation}
\label{sec:prelim}
In our problem setting, there is an unknown probability distribution $P$ over some countable \emph{instance space} $\mc{X}$ that generates examples, and a known hypothesis class $\mc{F}$ which is a collection of binary functions (hypotheses) that map from the instance space $\mc{X}$ to the \emph{label space} $\mc{Y} = \{0,1\}$.
Furthermore, there is an unknown \emph{target} hypothesis $f^{\star} \in \mc{F}$ that labels the examples generated by $P$.
A learning algorithm receives an i.i.d.\ \emph{training sample} $\bar{S} = (X_1, \dots, X_n)$ sampled from $P$ along with their accompanying \emph{labels} $( f^\star(X_1) , \dots , f^\star(X_n) )$ as input, and produces a hypothesis $\widehat{f}$ (not necessarily in $\mc{F}$) as its output.
The goal of the learning algorithm is to produce a hypothesis $\widehat{f}$ that has low error under $P$ which we define to be $\err{\widehat{f}}{P} = \Pr_{X \sim P}\left[\widehat{f}(X) \not= f^\star(X)\right]$.
We will find it convenient to differentiate between the training sample $\bar{S}$ that is potentially a multiset and its corresponding set version $S$ which we will refer to as the \emph{training set}. 
It will also be convenient to define the labelled training sample $(\bar{S}, f^\star(\bar{S})) = ((X_1, f^\star(X_1)) , \dots , (X_n, f^\star(X_n)))$.
The labelled training set  $(S, f^\star(S))$ is defined similarly.

We define the uniform distribution over a finite set $A$ to be $U(A)$.
When the choice of the set $A$ is clear from context we will sometimes abbreviate this to $U$.
For a finite set $A$ and any $k \le |A|$, define $\mc{S}_k(A)$ to be the set of subsets of $A$ of size $k$, i.e.,  $\mc{S}_k(A) = \{A' \subseteq A : |A'| = k\}$.
We will often abbreviate this to $\mc{S}_k$ when the finite set $A$ we use is clear from the context.
For any positive integer $n$ define $[n] = \{1, 2, \dots, n\}$.
Given a hypothesis class $\mc{F}$ and subset of the instance space $S = \{x_1, \dots, x_n\} \subseteq \mc{X}$, we define the projection of $\mc{F}$ onto $S$ to be $\mc{F}|_{S} = \{(f(x_1) , \dots, f(x_n)) : f \in \mc{F} \}$.
In words, the projection is the set of all functions the hypothesis class $\mc{F}$ realizes on the set $S$.
We say an $n$-element set $S = \{x_1, \dots, x_n\} \subseteq \mc{X}$ is \emph{shattered} by the hypothesis class $\mc{F}$ if the projection $ \mc{F}|_{S} = \{0,1\}^n$, i.e., $\mc{F}$ realizes every possible function on $S$.
The \emph{Vapnik Chervonenkis (VC) dimension} of a hypothesis class $\mc{F}$ is the largest integer $d$ such that there exists a $d$-element subset of $\mc{X}$ that is shattered by $\mc{F}$.
Throughout this paper we will use the variable $d$ to denote the VC dimension of hypothesis class $\mc{F}$ and it will always be clear from context which hypothesis class $d$ will correspond to.
For a projection $\mc{F}|_{S}$ with $|S| = n$, we define the Hamming distance between two hypotheses $f$ and $g$ in $\mc{F}|_{S}$, denoted by $\rho_{n}(f,g)$, to be the number of elements in $S$ that $f$ and $g$ differ on: $\rho_{n}(f,g) = |\{x \in S : f(x) \not= g(x) \}|$.

We next define the notion of star sets discussed earlier. 
The notion was first explicitly defined in the context of active learning, see \cite{hanneke2015minimax}.
This will be the main notion of complexity that will be used in our lower bound.

\begin{definition}[Star sets] \label{def:star}
    A concept class $\mathcal F$ defined on an infinite domain $\mathcal X$ has a \emph{star set of arbitrary size} if for any integer $n$, there exist $S = \{x_1, \ldots, x_n\} \subset \mathcal X$ and classifiers $f_{0}, f_1 \ldots, f_n \in \mathcal F$ such that for all $i = 1, \ldots, n$,
    \[
    \{x \in S : f_{i}(x) \neq f_{0}(x)\} = \{x_i\}.
    \]
    For any $n$ such a set $S$ is a \emph{star set}. The corresponding classifiers $\{f_i\}_{i = 0}^n$ \emph{witness} the star set $S$.
    \end{definition}

We now formally define one-inclusion graphs, orientations of one-inclusion graphs, and the one-inclusion graph algorithm.
\begin{definition}[One-inclusion graph]
Fix a hypothesis class $\mc{F}$ and an $n$-element subset of the domain $S \subseteq \mc{X}$.
The one inclusion graph $\mc{G}(\mc{F}|_{S}) = (V,E)$ has its vertex set as $V \coloneqq \mc{F}|_{S}$ and the edge set as
\[
E \coloneqq \{ \{f,g\} : f,g \in \mc{F}|_S \ , \ \rho_{n}(f,g) = 1\}.
\]
In words, we connect an edge between two vertices (projected hypotheses) if and only if they differ on a single element in the set $S$.
\end{definition}

In order to use the one-inclusion graph in a prediction algorithm, we will ``orient'' the undirected one-inclusion graph into a directed graph.
\begin{definition}[Orientations for one-inclusion graphs]
Fix hypothesis class $\mc{F}$ and
any finite subset $S \subseteq \mc{X}$. An \emph{orientation} of $\mc{G}(\mc{F}|_{S})$ is a function $\sigma_{S} : E \to V$ such that for any $e \in E$, $\sigma_{S}(e) \in e$.
Let $V$ be a collection of finite subsets of $\mc{X}$. An \emph{orientation rule for} $V$, usually denoted by $\sigma$, is a collection of orientations for each element of $V$, that is,
\[
\sigma = \{ \sigma_{S} : S \in V\}.
\]
An \emph{orientation rule} now denotes orientations for all possible finite subsets of $\mc{X}$.
\end{definition}
Put differently, an orientation takes an \emph{undirected} one-inclusion graph and defines a corresponding \emph{directed} one-inclusion graph by determining the head of each edge.
An orientation rule for hypothesis class $\mc{F}$ just tells us how we should orient any of the possible one-inclusion graphs we can obtain from $\mc{F}$ and any finite subset of the domain $S$.
For a one-inclusion graph $\mc{G}(\mc{F}|_{S}) = (V,E)$ and orientation $\sigma_S$, define the \emph{out-degree} of a vertex $v \in V$ to be
\[
\out(v;\sigma_{S}) = |\{e \in E : v \in e, \sigma_{S}(e) \not= v\}|.
\]
The \emph{max out-degree} of $\mc{G}(\mc{F}|_{S})$ is naturally defined as $\out(\sigma_{S}) = \max_{v \in V} \out(v;\sigma_{S})$.

We are now ready to describe the one-inclusion graph algorithm.
We present its pseudocode in~\cref{alg:OIG}.
A one-inclusion graph algorithm is defined by the orientation rule $\sigma$ it uses and the labeled training set $(S, f^\star(S))$ it receives.
This is reflected in the notation $\OIG{\sigma}{S}$ that we use for the hypothesis produced by the one-inclusion graph algorithm (see~\cref{alg:OIG}).
We now define what it means for a one-inclusion graph algorithm to be valid.
\begin{definition}
Fix a hypothesis class $\mc{F}$ with VC dimension $d$.
We say a one-inclusion graph algorithm for $\mc{F}$ that uses orientation rule $\sigma$ is \emph{valid} if
\[
\max_{\sigma_{S} \in \sigma} \out(\sigma_{S}) = O(d).
\]
\end{definition}

A beautiful result of Haussler, Littlestone and Warmuth shows that there is always a valid one-inclusion graph algorithm for any hypothesis class with finite VC dimension~\cite{haussler1994predicting}. 
\begin{theorem}{\cite[Theorem~2.2]{haussler1994predicting}}
    \label{thm:hlw_out_degree}
    For any hypothesis class $\mc{F}$ over instance space $\mc{X}$ with VC dimension at most $d$ and any finite $S \subset \mc{X}$, there exists an orientation $\sigma_S$ of $\mc{G} (\mc{F}|_{S})$ with
    \begin{equation*}
        \out (\sigma_S) \leq d.
    \end{equation*}
\end{theorem}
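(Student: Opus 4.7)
My plan is to reduce the existence of a bounded-out-degree orientation to a density bound on the one-inclusion graph, and then verify the density bound via a shifting argument from extremal set theory. The first reduction uses a classical graph-orientation theorem (Hakimi): a finite undirected graph $G = (V, E)$ admits an orientation with maximum out-degree at most $d$ if and only if $|E(G[U])| \leq d\,|U|$ for every $U \subseteq V$, where $G[U]$ is the induced subgraph. The forward direction is immediate since every internal edge contributes an out-arc to some vertex of $U$; the converse is a standard max-flow/min-cut argument on the bipartite network with source $s$, sink $t$, a node $u_v$ for each $v \in V$ (with capacity $d$ on the arc $s \to u_v$), a node $u_e$ for each $e \in E$ (with capacity $1$ on $u_e \to t$), and unit-capacity arcs $u_v \to u_e$ for $v \in e$.

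Every vertex-induced subgraph of the one-inclusion graph $\mathcal{G}(\mathcal{F}|_S)$ is itself the one-inclusion graph of a subclass of $\mathcal{F}|_S$, and any such subclass inherits the VC-dimension bound $\leq d$. So it suffices to prove the single global estimate $|E(\mathcal{G}(\mathcal{F}|_S))| \leq d\,|\mathcal{F}|_S|$ for every hypothesis class $\mathcal{F}$ of VC dimension at most $d$ and every finite $S \subseteq \mathcal{X}$. To do this I apply the shift operator $\Sigma_x$ in each coordinate $x \in S$: for each $f \in \mathcal{F}|_S$ with $f(x) = 1$, replace $f$ by the function $f^-$ obtained by flipping $f(x)$ to $0$, unless $f^-$ is already present, in which case leave $f$ alone. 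Iterating the shifts over all coordinates terminates after finitely many steps in a downward-closed family $\widetilde{\mathcal{F}} \subseteq \{0,1\}^S$. The standard properties of shifting give $|\widetilde{\mathcal{F}}| = |\mathcal{F}|_S|$, monotonicity of the VC dimension, and---crucially---that the number of edges of the one-inclusion graph does not decrease.

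Once $\widetilde{\mathcal{F}}$ is downward closed the bound is transparent: every edge has the form $\{g, g^-\}$ where $g^-$ flips a single $1$ of $g$ to $0$, so $|E(\mathcal{G}(\widetilde{\mathcal{F}}))| = \sum_{g \in \widetilde{\mathcal{F}}} |\{x : g(x) = 1\}|$; and by downward closure, $\widetilde{\mathcal{F}}$ shatters the support of every one of its members, forcing each support to have size at most $d$. Chaining the three reductions yields the orientation. The hard part of the plan is the edge-monotonicity of shifting: invariance of the vertex count and monotonicity of the VC dimension are classical, but the edge count requires a coordinate-by-coordinate case analysis, since a shift in direction $x$ can both destroy and create edges in other directions $y \neq x$ and one must check that the bookkeeping always gives a non-negative net change. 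The cleanest way I would carry this out is to show that for every $y \neq x$ and every pair of functions on $S \setminus \{x, y\}$, the set of $\{x,y\}$-extensions of that pair present in $\Sigma_x(\mathcal{F}|_S)$ contains at least as many ``one-coordinate-apart'' pairs in direction $y$ as the corresponding set in $\mathcal{F}|_S$, which is a short discrete verification.
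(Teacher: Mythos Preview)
Your argument is correct, but note that the paper does not prove this theorem at all: it is stated with a citation to Haussler, Littlestone, and Warmuth and used as a black box. The paper does, however, remark in \cref{sec:discussion} that the original proof in \cite{haussler1994predicting} is a network-flow argument and that an alternative existence proof due to Haussler \cite{haussler1995sphere} goes through the Alon--Tarsi theorem.

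Your route is a clean hybrid of these two. The reduction via Hakimi's criterion is exactly the flow argument of \cite{haussler1994predicting} repackaged as a graph-theoretic statement: the max-flow/min-cut construction you sketch is precisely how Haussler--Littlestone--Warmuth orient the graph. Where you diverge is in establishing the density bound $|E(\mc{G}(\mc{F}|_S))| \le d\,|\mc{F}|_S|$. You use down-shifting to pass to a hereditary family and then read off the bound from support sizes; this is closer in spirit to the shifting machinery in \cite{haussler1995sphere} than to the inductive edge-count in the original paper. The case check you outline for edge-monotonicity of $\Sigma_x$ is correct: restricted to any $\{x,y\}$-fiber the $y$-edge count is preserved except in the two ``diagonal'' two-point configurations $\{(0,0),(1,1)\}$ and $\{(0,1),(1,0)\}$, where the shift creates a $y$-edge, so the net change is always nonnegative. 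Compared with invoking Alon--Tarsi, your approach is more elementary and self-contained; compared with the original flow proof, it trades their direct inductive edge bound for the shifting lemma, which has the advantage of making the extremal (downward-closed) case completely explicit.
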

\begin{algorithm}[H]
\caption{One-inclusion graph algorithm.}
\label{alg:OIG}
\textbf{Inputs:} Labelled training set $(S, f^\star(S))$, orientation rule $\sigma$.\\
\textbf{Output:} Hypothesis $\OIG{\sigma}{S} : \mc{X} \to \mc{Y}$.
\vspace{1em}

For any point $x \in \mc{X}$ the hypothesis $\OIG{\sigma}{S}$ predicts as follows:\hspace{-5em}
\vspace{1em}
\begin{algorithmic}[1] 
    \STATE If there is a unique label $y$ for $x$ consistent with $(S,f^\star(S))$ and $\mc{F}$, predict $y$.\hspace{-1em}
    \vspace{0.5em}
    \STATE Let $e$ be the edge in $\mc{G}(\mc{F}|_{S \cup \{x\}})$ with hypotheses consistent with $(S , f^\star(S))$ but not on $x$.
    \vspace{0.5em}
    \STATE Predict according to $ \sigma_{S\cup\{x\}} (e)$, i.e., the hypothesis pointed to in the orientation of $e$.
\end{algorithmic}
\end{algorithm}
\section{The construction}
\label{sec:construction}
In this section, we prove our main result.
Let $A = \{x_1, \dots, x_m\}$ be a set of size $m$ with a fixed ordering of its elements.
For any $k$-element subset $A' \subseteq A$ with $k \le m$, we will slightly overload notation by simultaneously referring to $A'$ as a subset of elements of $A$, and the length $n$ binary vector that is $1$ in the $i$-th entry if and only if $x_i$ is in $A'$.
We will always make it clear what the set $A$ is to make sure this overloaded notation makes sense.

We begin by recalling VT codes and stating a useful property they enjoy.
VT codes of length $m$ and parameter $a \in \{0,1 \dots, m\}$ are given by the following family of vectors:
\begin{align*}
    \code{a}{m} = \left\{ C \in \{0, 1\}^{m} : \sum_{i=1}^{m} i \cdot  C(i) \equiv a \pmod{m+1}  \right\}.
\end{align*}
We will find the following notion of coverage to be useful.
\begin{definition}[Coverage]
    Fix an integer $m$. We say a vector $S' \in \{0,1\}^{m}$ \emph{covers} the vector $S \in \{0,1\}^{m}$ if there is an index $i \in \left[ m \right]$ such that $ S(i) =1 $ but $S'(i) = 0$, and for every $j \neq i$ we have $S(j) = S'(j)$. We will denote this by $ S' \prec S$.
\end{definition}

With this notation, VT codes satisfy a ``uniqueness'' property crucial for our analysis. 
\begin{lemma}[Unique neighborhoods]\label{lem:uniqueness}
Let $A$ be a set of $m$ elements. For any $k \le m$, binary vector $S \in \mathcal{S}_{k}(A)$ and $a \in \{0, 1, \dots, m\}$, there exists at most one binary vector $S' \in \code{a}{m}$ with $ S' \prec S$.
\end{lemma}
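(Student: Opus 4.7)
The plan is to argue by contradiction using the defining weighted-sum congruence of VT codes and the fact that flipping a single $1\to 0$ only changes the weighted sum by the index of that position.

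First I would unpack the hypothesis $S' \prec S$: by definition, $S'$ is obtained from $S$ by taking some index $i \in [m]$ with $S(i)=1$ and setting $S'(i)=0$, while leaving every other coordinate unchanged. Hence the weighted sums are related by
\[
\sum_{j=1}^m j\cdot S'(j) \;=\; \sum_{j=1}^m j\cdot S(j) \;-\; i.
\]

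Then I would suppose for contradiction that there exist two distinct vectors $S'_1, S'_2 \in \code{a}{m}$ with $S'_1 \prec S$ and $S'_2 \prec S$. Let $i_1$ and $i_2$ be the respective flipped indices, so that each $i_\ell \in [m]$ and the above identity gives
\[
\sum_{j=1}^m j\cdot S'_\ell(j) \;=\; \sum_{j=1}^m j\cdot S(j) - i_\ell \qquad (\ell=1,2).
\]
Since both $S'_1$ and $S'_2$ lie in $\code{a}{m}$, both left-hand sides are congruent to $a$ modulo $m+1$. Subtracting the two congruences yields $i_1 \equiv i_2 \pmod{m+1}$. Because $i_1, i_2 \in \{1,2,\dots,m\}$, two such indices congruent modulo $m+1$ must actually be equal, so $i_1=i_2$, which forces $S'_1=S'_2$ and contradicts distinctness.

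I do not expect any real obstacle: the entire content of the lemma is the standard single-deletion correcting property of VT codes, and the modular arithmetic above is exactly Varshamov and Tenengolts' original observation. The only things to be a bit careful about are (i) using the overloaded notation correctly so that $S \in \mathcal{S}_k(A)$ is treated as a binary vector in $\{0,1\}^m$ and the coverage relation $S' \prec S$ is read off coordinatewise, and (ii) noting that the argument does not use $k$ at all beyond the fact that $S$ has at least one coordinate equal to $1$ (needed for a cover to exist in the first place).
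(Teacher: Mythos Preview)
Your proof is correct and essentially the same as the paper's: both argue by contradiction that two distinct covers would force the difference of their flipped indices to be $\equiv 0 \pmod{m+1}$, which is impossible for indices in $[m]$. The only cosmetic difference is that you route the computation through $\sum_j j\,S(j)$ while the paper subtracts $\sum_j j\,S'(j)$ and $\sum_j j\,S''(j)$ directly.
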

\begin{proof}
    Towards a contradiction assume that for some $S \in \mc{S}_k(A)$ there were two vectors $S', S'' \in \code{a}{m} $ that satisfied the property above with corresponding indices $i'$ and $i''$. Then, we have that $S'$ and $S''$ disagree only on $i'$ and $i''$. 
    So 
    \begin{align}
        0 \equiv \sum_{i=1}^m i \cdot S'(i) - \sum_{i=1}^m i \cdot S''(i)  \equiv i'' - i' \pmod{m+1}.
     \end{align}
     This is a contradiction since $1 \leq i', i'' \leq m$. 
\end{proof}

Let $\mc{F}$ be the hypothesis class with star sets of arbitrary size and VC dimension $d$ for which we would like to construct the bad one-inclusion graph algorithm.
Let $n$ denote the sample size and let $ \starset \subset \mc{X}$ be a star set of size $2n$.
Let $f_0, f_1, \dots , f_{2n}$ denote the corresponding functions that witness the star set. 
Through a simple re-labeling procedure, we may assume that the center of the star set $f_0$ is the all zeros function; i.e.,  $f_0(x) = 0$ for all $x \in  \starset$. Our distribution will simply be the uniform distribution over $ \starset $ with the labels generated by $ f^{\star} =  f_0$. 
Hence, our labelled training sample consists of points $\bar{S} = ((X_1, f^\star(X_1)), \dots,(X_n, f^\star(X_n))) $ where each $X_i$ is sampled from the uniform distribution $U := U( \starset )$ and $f^\star(X_i) = 0$. 

We will first prove a lower bound on the error conditioned on the number of unique elements observed in our training sample $\bar{S} \sim U^n$, i.e., the size of the \emph{training set} $S$. 
Let $|S| = k$.
Note that $S$ may be associated with an element of $\mc{S}_k(\starset)$. Slightly abusing notation we denote $\mc{S}_k(\starset)$ by $\mc{S}_k$, and $\mc{S}_{k+1}(\starset)$ by $\mc{S}_{k + 1}$ respectively.
We will now define a bipartite graph with one set of vertices a subset of $\mc{S}_k$ (and hence, a possible realization of $S$) and the other $\mc{S}_{k + 1}$. 
This graph will then be used to construct a orientation rule with poor performance.
Formally, the vertex sets of the graph are defined below
\begin{gather*}  
    V_1 = \bigcup_{0\leq i < 4{\ceil{\delta n}} } T_{(i)}, \\
    V_2 = \mc{S}_{k + 1}, \tag{VERT-SETS} \label{eq:vert-set}
\end{gather*}
where $ T_i \coloneqq \code{i}{2n} \cap \mc{S}_k$ is the intersection of the code $\code{i}{2n}$ with the set of vectors containing exactly $k$ ones and $T_{(0)}, \dots, T_{(2n)}$ is a re-ordering such that $|T_{(0)}| \ge |T_{(1)}| \ge \dots \ge |T_{(2n)}|$.
In particular, since $\code{i}{2n} \cap \code{j}{2n} = \emptyset$ for $i \neq j$, we have for any $\ell \in [2n] \cup \{0\}$,
\begin{equation}
\label{eq:reordering}
\sum\nolimits_{i = 0}^{\ell} |T_{(i)}| \ge \frac{\ell + 1}{2n+1}\cdot|\mc{S}_k| \ge \frac{\ell + 1}{3n}\cdot|\mc{S}_k|.
\end{equation}
We will construct our family of one-inclusion graphs such that for a substantial fraction of the training sets in $V_1$, the corresponding one-inclusion strategy incurs large error. We now construct the edge set of our (undirected) bipartite graph, $G = (V_1, V_2, E)$. Define for all $v \in V_1 \cup V_2$, 
\[
N(v) = \{u \in V_1 \cup V_2: (u, v) \in E\},
\]
that is, the set of neighbors of $v$ in $G$. We will use the probabilistic method to construct an edge set, $E$, satisfying certain cardinality constraints. The first constraint ensures that the family of one-inclusion graphs we construct from $G$ satisfies the appropriate out-degree constraints while the second will be used to show that they incur large error.
\begin{lemma}
    \label{lem:matching_graph}
    There exists an edge set $E$ such that the bipartite graph $G = (V_1, V_2, E)$ satisfies,
    \begin{gather*}
       \text{for all}\ v \in V_2,\  \abs{N(v)} \leq d, \quad \text{and} \quad
        \abs*{\lbrb{v' \in V_1: \abs{N(v')} \geq 
        \frac{d}{8\delta}}} \geq \frac{3}{4}\abs{V_1}.
    \end{gather*}
\end{lemma}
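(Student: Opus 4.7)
The plan is to use the probabilistic method on the ``natural'' bipartite graph $H = (V_1, V_2, E_H)$, where $(v', v) \in E_H$ iff $v' \prec v$. Two structural facts drive the argument. First, every $v' \in V_1$ has exactly $2n - k$ natural extensions in $V_2$ (one for each zero coordinate of $v'$). Second, by~\cref{lem:uniqueness} every $v \in V_2$ has at most $K \coloneqq 4\lceil\delta n\rceil$ predecessors in $V_1$: for each of the $4\lceil\delta n\rceil$ codes $T_{(i)}$ comprising $V_1$, at most one element of $T_{(i)}$ can be a predecessor of $v$. This second fact is precisely where the VT-code uniqueness property is essential.

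I would then construct $E$ as follows: for each $v \in V_2$ independently, select a uniformly random subset $S_v \subseteq N_H(v) \cap V_1$ of size $\min(d, |N_H(v) \cap V_1|)$, and set $E = \{(v', v) : v' \in S_v\}$. The upper-bound constraint $|N(v)| \leq d$ then holds deterministically for every $v \in V_2$, so the first conclusion is immediate. For the second, note that $|N(v')| = \sum_{v \in N_H(v')} \mathbf{1}[v' \in S_v]$ is a sum of $2n - k$ independent Bernoulli variables, each with success probability at least $d/K$. Therefore
\[
\E[|N(v')|] \;\geq\; (2n - k)\cdot\frac{d}{K} \;\geq\; \frac{n d}{8\delta n} \;=\; \frac{d}{8\delta},
\]
using $k \leq n$ and $K \leq 8\delta n$, where the latter follows from $\delta n \geq 1$ --- which is guaranteed by $\delta > c_1 d/n$ with $c_1 \geq 1$.

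The remaining step is to convert this lower bound on the mean into the high-probability statement $\Pr[|N(v')| < d/(8\delta)] \leq 1/4$ for every $v' \in V_1$. Given such a bound, linearity of expectation applied to the count $\#\{v' \in V_1 : |N(v')| < d/(8\delta)\}$ produces an expected bad count of at most $|V_1|/4$, and Markov's inequality on this integer-valued count then yields a realization in which at most $|V_1|/4$ vertices are bad, which is exactly the statement of the lemma. I expect this concentration step to be the main technical obstacle: in the worst case the expected degree matches the threshold without much slack, so a naive Chebyshev or Chernoff bound does not immediately give the desired $1/4$ tail, and one must combine a multiplicative Chernoff lower-tail bound with a suitable choice of the absolute constants (in particular, $c_2$ small enough that $d/(8\delta)$ is above a large absolute constant) to close the gap uniformly over $v'$.

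A clean structural alternative, which I would mention as a backup, is to reformulate the problem as a bipartite degree-constrained subgraph problem with demand $d/(8\delta)$ on $V_1$ and capacity $d$ on $V_2$. The double-counting inequality $|U|(2n - k) = \sum_{v \in N_H(U)} |N_H(v) \cap U| \leq K \cdot |N_H(U)|$ immediately gives $|N_H(U)| \geq |U|/(8\delta)$ for every $U \subseteq V_1$, which is the Hall-type condition for existence of such a subgraph. Invoking the corresponding theorem (equivalently, total unimodularity of the bipartite vertex-edge incidence matrix together with the fractional solution $x_{v', v} = d/K$) would yield a subgraph satisfying $|N(v')| \geq d/(8\delta)$ for \emph{every} $v' \in V_1$, strictly stronger than the $3/4$-fraction statement.
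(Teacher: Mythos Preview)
Your primary probabilistic construction is exactly the paper's: the same random edge-selection per $v\in V_2$, the same appeal to \cref{lem:uniqueness} for the bound $|M(v)|\le 4\lceil\delta n\rceil$, and the same linearity-of-expectation finish. The concentration step you flag as the crux is resolved in the paper by Chebyshev rather than Chernoff: since $|N(v')|$ is a sum of independent indicators one has $\Var(|N(v')|)\le\E[|N(v')|]$, and taking $c_2$ small forces $\sqrt{\Var}/\E\le\sqrt{6\delta/d}\le 1/10$, whence $\Pr\bigl[|N(v')|\ge\tfrac34\,\E[|N(v')|]\bigr]\ge 3/4$ with $\tfrac34\,\E[|N(v')|]\ge d/(8\delta)$. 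Your backup $b$-matching argument via total unimodularity is a genuinely different route the paper does not take; it would yield the stronger conclusion that \emph{every} $v'\in V_1$ attains degree at least $\lfloor d/(8\delta)\rfloor$, at the price of invoking an external structural result rather than keeping the proof self-contained.
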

\begin{proof}
Our proof will utilize the probabilistic method. Defining $M(v) = \lbrb{v' \in V_1: v' \prec v}$ for all $v \in V_2$, $E$ is constructed according to the following random process. For any $v \in V_2$:
\begin{enumerate}
    \item With $\ell = \min\left\{d, \abs{M(v)} \right\}$, pick $\{v_{i}'\}_{i \in [\ell]}$ uniformly at random without replacement from $M(v)$.
    \item Add all edges $\lbrb{(v, v_{i}')}_{i \in [\ell]}$ to $E$.
\end{enumerate}
Note that first claim of the lemma follows immediately from the definition of the probabilistic process. For the second, fix $v' \in V_1$ and define
\begin{equation*}
    Q(v') = \{v \in \mc{S}_{k + 1}: v' \prec v\}, 
\end{equation*}
and for $v \in V_2$ define 
\begin{equation*}
    p_{v} = \frac{\min\left\{d, \abs{M(v)}\right\}}{\abs{M(v)}}.
\end{equation*}
We have by the above definition and linearity of expectation
\begin{equation*}
    \E \lsrs{\abs{N(v')}} = \sum_{v \in Q(v')} \frac{\min\left\{d, \abs{M(v)}\right\}}{\abs{M(v)}} = \sum_{v \in Q(v')} p_{v}.
\end{equation*}
From the independence of the indicators $\bm{1} \lbrb{v \in N(v')}$,
\begin{equation*}
    \Var (\abs{N(v')}) = \Var \lprp{\sum_{v \in Q(v')} \bm{1} \lbrb{v \in N(v')}} = \sum_{v \in Q(v')}  \Var \lprp{\bm{1} \lbrb{v \in N(v')}} \leq \sum_{v \in Q(v')} p_{v}.
\end{equation*}
By \cref{lem:uniqueness} and the definition of $V_1$, we have $\abs{M(v)} \leq 4 \ceil{\delta n}$ for all $v \in V_2$. 
When $c_1$ (\cref{thm:ourlowerbound}) is large enough, we have $4\ceil{\delta n} \leq 6\delta n$ and $n\delta \geq c_1 d$, which yields
\begin{equation*}
    p_{v} \geq \frac{1}{6} \cdot \frac{d}{\delta n}.
\end{equation*}
Additionally, we have when $c_2$ (\cref{thm:ourlowerbound}) is small enough and using the fact that $|Q(v')| \ge n$,
\begin{equation*}
    \frac{\sqrt{\Var (\abs{N(v')})}}{\E \lsrs{\abs{N(v')}}} \leq \frac{\sqrt{\sum_{v \in Q(v')} p_v}}{\sum_{v \in Q(v')} p_v} = \sqrt{\frac{1}{\sum_{v \in Q(v')} p_v}} \leq \sqrt{\frac{6\delta n}{d\abs{Q(v')}}} \leq \sqrt{\frac{6\delta}{d}} \leq \frac{1}{10}.
\end{equation*}
This yields via Chebyshev's inequality,
\begin{align*}
     \Pr \lsrs{\abs{N(v')} \geq \frac{d}{8 \delta}} \geq \Pr \lsrs{\abs{N(v')} \geq \frac{\abs{Q(v')}}{8} \cdot \frac{d}{\delta n}} \geq \Pr \lsrs{\abs{N(v')} \geq \frac{3}{4} \sum_{v \in Q(v')} p_v} \geq \frac{3}{4}.
\end{align*}
Linearity of expectation (on $v' \in V_1$) establishes the second claim by the probabilistic method.
\end{proof}

With the bipartite graph, $G = (V_1, V_2, E)$, defined in \ref{eq:vert-set} and \cref{lem:matching_graph}, we will now construct our family of one-inclusion graphs for the sets in $\mc{S}_{k + 1}$.
We think of an element of $S' \in \mc{S}_{k+1}$ as the union of a training set $S$ and a new test point $x_{n+1}$.
Informally, for any set $S' \in \mc{S}_{k + 1}$, we pick any low out-degree orientation and then reorient the edges of the zero function. We prove that for most of the $k$-sized training sets $S \in V_1$ and a significant fraction of their ``extensions'' to $(k + 1)$-sized sets $S' \in V_2$, the one-inclusion graph algorithm predicts incorrectly on $x_{n+1} \in S' \setminus S$. Our construction is formally described in \cref{alg:BadOIG}. Note that \cref{alg:BadOIG} only constructs an orientation rule for $V_2$ (which corresponds to $\mc{S}_{k + 1}$ in this context).

\begin{algorithm}[H]
    \caption{Constructing sub-optimal one-inclusion graphs}
    \label{alg:BadOIG}
    \textbf{Input:} Bipartite Graph $G = (V_1, V_2, E)$. \\
    \textbf{Output:} Orientation rule $\sigma$ for $V_2$.
    \vspace{0.5em}
    \begin{algorithmic}[1]
    \setcounter{ALC@unique}{0}
    \STATE Set $\sigma = \{\}$.
    \vspace{0.5em}
    \FOR{$S' \in V_2$}
        \vspace{0.5em}
        \STATE Pick any orientation $\sigma_{S'}$ that has max out-degree at most $d$.\hspace{-1em}\label{line:aribtrary_orientation}
        \vspace{0.5em}
        \STATE For every $S \in N(S')$, reorient $\sigma_{S'}(\{f_0,  f_{i}\}) = f_i$ where $f_i = \bm{1} \lbrb{x \in S' \setminus S }$.
        \vspace{0.5em}
        \STATE For any other other edge $\{f_i,f_0$\} not modified in the previous step set $\sigma_{S'}(\{f_0,  f_{i}\}) = f_0$.~\label{line:reorientation}
        \vspace{0.5em}
        \STATE Add $\sigma_{S'}$ to $\sigma$.
        \vspace{0.5em}
    \ENDFOR
    \vspace{0.5em}
    \STATE \textbf{return} $\sigma$.
    \end{algorithmic}
\end{algorithm}

Before we proceed, we first show that the orientation rule constructed in \cref{alg:BadOIG} is valid.
\begin{lemma}
    \label{lem:valid_oig}
    When $G$ satisfies the conclusion of \cref{lem:matching_graph} the orientation rule for $V_2$, $\sigma$, constructed in~\cref{alg:BadOIG} satisfies
    \begin{equation*}
        \max_{\sigma_{S'} \in \sigma} \out(\sigma_{S'}) \le d+1.
    \end{equation*}
\end{lemma}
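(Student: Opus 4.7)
The plan is to start from the initial orientation produced in \cref{line:aribtrary_orientation}, which has max out-degree at most $d$ by \cref{thm:hlw_out_degree}, and then carefully track how the two subsequent re-orientation steps can affect the out-degree of each vertex of $\mc{G}(\mc{F}|_{S'})$. The key structural observation will be that every modification touches an edge of the form $\{f_0, f_i\}$, where $f_i|_{S'}$ is a ``spoke'' of the star centered at $f_0$; so any vertex $v$ that is neither $f_0$ nor one of these spokes retains whatever out-degree at most $d$ it had in the initial orientation. A small preliminary remark is that the neighbors of $f_0$ in $\mc{G}(\mc{F}|_{S'})$ are exactly the spoke functions $\{f_i|_{S'} : x_i \in S'\}$, since $f_0$ is the all-zeros function on $\starset$, so a neighbor in the one-inclusion graph must be a single-point indicator.

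For the center $f_0$, I would argue that after both re-orientation passes its outgoing edges are exactly the $|N(S')|$ edges flipped outward in the ``for every $S \in N(S')$'' step: \cref{line:reorientation} explicitly orients every other edge incident to $f_0$ back toward $f_0$. Hence $\out(f_0;\sigma_{S'}) = |N(S')| \leq d$ by the first conclusion of \cref{lem:matching_graph}. For each spoke $f_i$, on the other hand, only a single incident edge can be altered (the edge $\{f_0, f_i\}$ itself), so its out-degree can increase by at most one relative to the initial orientation, yielding an upper bound of $d+1$. Combining the three cases gives $\max_{\sigma_{S'} \in \sigma} \out(\sigma_{S'}) \leq d + 1$.

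The main point to double-check is that the two re-orientation passes are consistent: distinct $S \in N(S')$ single out distinct elements of $S' \setminus S$ and therefore assign distinct edges $\{f_0, f_i\}$ to flip, and the second pass by construction only touches edges not modified by the first. Once this consistency is verified the argument is essentially bookkeeping, and I do not expect any serious obstacle beyond being careful to separately bound the out-degree of $f_0$ (which uses \cref{lem:matching_graph}) and of each spoke (which uses the ``at most one edge touched'' observation).
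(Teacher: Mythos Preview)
Your argument is correct and follows essentially the same approach as the paper: bound $\out(f_0;\sigma_{S'})$ by $|N(S')|\le d$ via \cref{lem:matching_graph}, and observe that any other vertex $g$ can gain at most one outgoing edge (the edge $\{g,f_0\}$ reoriented toward $f_0$ in \cref{line:reorientation}) over its initial out-degree of at most $d$. The paper collapses your ``spoke'' and ``non-spoke'' cases into a single ``$g\neq f_0$'' case, but otherwise the reasoning is the same.
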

\begin{remark}
    For the sake of generality, we prove \cref{lem:valid_oig} with an upper bound of $d+1$.
    If we consider more structured classes, e.g., the class of functions that take on the value $1$ at most $d$ times, we can improve this bound to $d$.
    This would match the out-degree bound of \cref{thm:hlw_out_degree}.
\end{remark}
\begin{remark}
The bound $d + 1$ on the max out-degree only slightly changes the risk bound for the one-inclusion graph algorithm. In particular, our orientation rule gives
\[
\E\ \err{\OIGsimpl}{P} \le \frac{d + 1}{n + 1}, \quad \textrm{and by Markov's inequality:} \quad \err{\OIGsimpl}{P} \le \frac{d + 1}{(n + 1)\delta}.
\]
\end{remark}
\begin{proof}[Proof of \cref{lem:valid_oig}]
Let $S' \in \mc{S}_{k + 1}$. The out-degree of the all-zeros hypothesis $f_0$ is at most $d$ by the first claim of \cref{lem:matching_graph}. 
For any hypothesis $g \in \mc{F}|_{S'}$ with $ g \neq f_0$, its out-degree is at most $d$ in the original orientation $\sigma_{S'}$ selected in
Line 3 of~\cref{alg:BadOIG} (such an orientation exists by \cref{thm:hlw_out_degree}).
We add at most one outgoing edge from $g$ to $f_0$ in 
Line 5 concluding the proof.
\end{proof}

Our last lemma will show that the one-inclusion strategy defined by the orientation rule $\sigma$ constructed in \cref{alg:BadOIG} incurs large error for most training sets in $V_1$. In fact, this will be shown for the subset whose existence was established in \cref{lem:matching_graph}. Formally, define $W_1^k$ as
\begin{equation*}
    W_1^k = \lbrb{v \in V_1: \abs{N(v)} \geq \frac{d}{8 \delta n}}.
\end{equation*}
The following lemma shows that any training set from $W_1^k$ incurs large error.
\begin{lemma}
\label{lem:bad_dataset_error}
For any $S \in W_1^k$, the one-inclusion graph algorithm that predicts using the orientation rule $\sigma$ defined by \cref{alg:BadOIG} on input $G$ satisfying the conclusion of \cref{lem:matching_graph} has
\begin{equation*}
    \err{\OIG{\sigma}{S}}{P} \geq \frac{1}{16} \cdot \frac{d}{\delta n}.
\end{equation*}
\end{lemma}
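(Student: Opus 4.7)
The plan is to compute the error of $\OIG{\sigma}{S}$ directly by tracing what \cref{alg:OIG} does on each possible test point $x \in \starset$ drawn from $P = U(\starset)$. Since the training labels are all $0$ (because $f^\star = f_0$), any $x \in S$ is handled by Line~1 of \cref{alg:OIG}: the unique consistent label on $x$ is $0$, so no error is incurred on points in the training set. It remains to analyze the error incurred on the $2n - k$ points in $\starset \setminus S$.

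Fix such a point $x \in \starset \setminus S$ and let $S' = S \cup \{x\} \in \mc{S}_{k+1} = V_2$. Let $f_i$ be the star-witness indicator of $x$, i.e., the element of $\mc{F}$ with $\{y \in \starset : f_i(y) \neq f_0(y)\} = \{x\}$. Then $f_0|_{S'}$ and $f_i|_{S'}$ both lie in $\mc{F}|_{S'}$, both agree with the training labels on $S$, and differ only on $x$, so they form the unique edge $e = \{f_0, f_i\}$ referenced in Line~2 of \cref{alg:OIG}. Thus the algorithm's prediction on $x$ is precisely $\sigma_{S'}(e)(x)$.

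The next step is to observe that the neighbors of $S$ in the bipartite graph $G$ are exactly the $(k+1)$-sized supersets $S' = S \cup \{x\}$ for which $(S, S') \in E$: indeed, $S \prec S'$ iff $S'$ extends $S$ by a single element, so any neighbor of $S$ in $V_2$ has this form. For every $S' \in N(S)$, Line~4 of \cref{alg:BadOIG} overrides the arbitrary initial orientation from Line~3 and forcibly sets $\sigma_{S'}(\{f_0, f_i\}) = f_i$ where $f_i = \bm{1}\{\cdot \in S' \setminus S\}$ is precisely the witness indicator described above. Hence $\OIG{\sigma}{S}(x) = f_i(x) = 1 \neq 0 = f^\star(x)$, so the algorithm errs on the corresponding test point $x = S' \setminus S$.

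Putting these pieces together, the error satisfies
\[
\err{\OIG{\sigma}{S}}{P} \;\geq\; \Pr_{X \sim U(\starset)}[X \in \starset \setminus S,\ S \cup \{X\} \in N(S)] \;=\; \frac{|N(S)|}{2n}.
\]
Since $S \in W_1^k$ gives $|N(S)| \geq d/(8\delta)$ (the second bound of \cref{lem:matching_graph}), the lemma's bound $d/(16\delta n)$ follows. I do not foresee any genuine obstacle: the only delicate point is the identification of the forced consistent edge in $\mc{G}(\mc{F}|_{S'})$, and this is immediate from the definition of a star set together with the choice of $f_0$ as the target.
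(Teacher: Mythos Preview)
Your proof is correct and follows essentially the same approach as the paper's: both establish $\err{\OIG{\sigma}{S}}{P} \geq |N(S)|/(2n)$ by observing that for each $S' \in N(S)$ the reorientation in Line~4 of \cref{alg:BadOIG} forces a wrong prediction on the test point $S'\setminus S$, and then invoke $|N(S)|\geq d/(8\delta)$ from \cref{lem:matching_graph}. Your version is simply more explicit about why the relevant edge in $\mc{G}(\mc{F}|_{S'})$ is $\{f_0,f_i\}$ and why points in $S$ incur no error, details the paper leaves implicit.
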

\begin{proof}
By the construction of the orientation rule $\sigma$ in \cref{alg:BadOIG}, we only make an error on $x$ if $S \cup \{x\} \in N(S)$, as we only predict $1$ in this scenario. We now have
\begin{equation*}
    \err{\OIG{\sigma}{S}}{P} \geq \frac{\abs{N(S)}}{2n} \geq \frac{1}{16} \cdot \frac{d}{\delta n}. \qedhere
\end{equation*}
\end{proof}
We will now prove \cref{thm:ourlowerbound}. Recall that our distribution is the uniform distribution over the participating elements of a star set of size $2n$ with the center canonically identified with the zero function and the elements with the set $[2n]$. For each $k \in [n]$ denoting the possible number of unique observed elements, consider the family of one-inclusion prediction strategies constructed in \cref{alg:BadOIG} with the bipartite graph defined in \ref{eq:vert-set} and \cref{lem:matching_graph}. By \cref{lem:bad_dataset_error}, any unique training set $S$ in $W^k_1$ incurs large error. Hence, we only need to lower bound the probability of observing a training set from $W_1 \coloneqq \cup_{k \in [n]} W^k_1$. Note that conditioned on $\abs{S} = k$, the training set $S$ is uniformly distributed on $\mc{S}_{k}$.
Hence, we have by \cref{lem:matching_graph} and \eqref{eq:reordering} that
\begin{align*}
    \Pr_{\bar{S} \sim U^n}\left[S \in W^k_1 \large \, \middle| \, \abs{S'} = k\right] &= \frac{\abs{W^k_1}}{\abs{\mc{S}_k}} \geq \frac{3}{4} \cdot \frac{\abs{V_1}}{\abs{\mc{S}_k}} = \frac{3}{4} \cdot \sum_{i = 0}^{4\ceil{\delta n} -1} \frac{\abs{T_{(i)}}}{\abs{\mc{S}_k}} \ge \frac{3}{4|\mc{S}_k|} \cdot \frac{4 \ceil{\delta n} |\mc{S}_k|}{3n} \ge \delta.
\end{align*}
To conclude the proof of the theorem, we now have
\[
    \Pr_{\bar{S} \sim U^n}\left[ S \in W_1 \right]  = \sum_{k=1}^{n}  \Pr_{\bar{S} \sim U^n}\left[S \in W^k_1 \large \, \middle| \, \abs{S'} = k\right] \cdot \Pr_{\bar{S} \sim U^n}\left[ |S| = k \right] \ge \delta.
\]
\qed

\paragraph{Acknowledgments.} The authors would like to thank Omar Alrabiah for fruitful discussions.

{\footnotesize
\bibliographystyle{alpha}
\bibliography{refs.bib}
}

\end{document}